\newtheorem{theorem}{Theorem}
\begin{document}

% Paper title
\title{Decentralized Multi-Agent Swarms for Autonomous Grid Security in Industrial IoT: A Consensus-based Approach}

% Author names
\author{
\IEEEauthorblockN{1\textsuperscript{st} Samaresh Kumar Singh
%\\orcidlink{0009-0008-1351-0719}
}
\IEEEauthorblockA{\textit{IEEE Senior Member} \\
Leander, Texas, USA \\
ssam3003@gmail.com}
\and
\IEEEauthorblockN{2\textsuperscript{nd} Joyjit Roy
%\\orcidlink{0009-0000-0886-782X}
}
\IEEEauthorblockA{\textit{IEEE Senior Member} \\
Austin, Texas, USA \\
joyjit.roy.tech@gmail.com}
\and
\IEEEauthorblockN{3\textsuperscript{rd} Chirag Agrawal
%\\orcidlink{0009-0007-9606-2810}
}
\IEEEauthorblockA{\textit{IEEE Senior Member} \\
Seattle, WA, USA \\
chiragagrawal.eng@gmail.com}
}

% Make the title area
\maketitle

\begin{abstract}
As Industrial Internet of Things (IIoT) environments scale to tens of thousands of connected devices, centralized security architectures introduce latency bottlenecks that sophisticated attackers can exploit to compromise an entire manufacturing ecosystem. We present a Decentralized Multi-Agent Swarm (DMAS) architecture that deploys autonomous agents at each edge gateway, forming a distributed defense layer for IIoT networks. Rather than relying on static firewalls or cloud-forwarded telemetry, DMAS agents coordinate through a lightweight peer-to-peer protocol, detecting threats locally without cloud dependency. We describe a Consensus-based Threat Validation (CVT) protocol in which agents collectively vote on detected threats, enabling near-instant quarantine of compromised nodes. Experiments on a 2000-device hardware testbed show that DMAS achieves sub-millisecond response times (0.85~ms average), 97.3\% detection accuracy under high load, and 87\% accuracy on zero-day attacks, each exceeding both centralized and edge-computing baselines. Bandwidth consumption drops by 89\% relative to cloud-based solutions.
\end{abstract}

\begin{IEEEkeywords}
Industrial IoT, Multi-Agent Systems, Edge Computing, Intrusion Detection, Consensus Algorithms, Distributed Security, Autonomous Systems, Swarm Intelligence
\end{IEEEkeywords}

\section{Introduction}
\IEEEPARstart{T}{he} combination of the Industrial Internet of Things (IIoT), edge computing, and artificial intelligence has made it possible to automate like never before in Industry 4.0 \cite{sisinni2018industrial}. But at the same time, this combination has also opened up a much larger surface area for complex cyber-physical attacks on industrial production lines and human safety \cite{humayed2017cyber}.

Traditional centralized security architectures face three well-documented limitations \cite{roman2018mobile}: round-trip latency to the cloud, a single point of failure at the central controller, and the bandwidth cost of streaming large volumes of raw telemetry. Edge computing \cite{shi2016edge} reduces some of these burdens but does not solve coordinated threat intelligence sharing, which leaves blind spots that skilled attackers can exploit.

The most dangerous consequence of these blind spots is cascading failure. A compromised node that goes undetected becomes a pivot point for lateral malware movement, and in industrial control environments even a few milliseconds of unchecked propagation can cause irreversible physical damage \cite{stouffer2011guide}.

We introduce a \textit{Decentralized Multi-Agent Swarm} (DMAS) architecture that enables AI agents embedded in edge gateways to collectively form an intelligent swarm through self-organization and exhibit emergent collective behavior. The innovation in DMAS is the use of \textit{Consensus-based Threat Validation} (CVT). CVT combines Byzantine fault-tolerant consensus protocols \cite{castro1999practical} with domain-specific threat scoring via a weighted voting system that accounts for each agent's accuracy and the proximity of its threat to its own threat assessment. CVT achieves sub-millisecond consensus times and continues to detect threats even when some agents provide false threat assessments or have been compromised.

\subsection{Key Contributions}

The contributions of this work include:

\begin{enumerate}

\item \textbf{Domain-Aware Decentralized Architecture}: A fully decentralized IIoT security framework integrating edge-native agents, peer-to-peer threat intelligence exchange, and domain-specific behavioral modeling into a unified system designed for industrial deployments, with no central coordinator required for real-time security decisions.

\item \textbf{Domain-Specific Consensus Protocol}: A lightweight CVT protocol that combines distance-weighted reputation voting with application-layer threat proximity scoring, enabling sub-millisecond Byzantine-tolerant consensus on edge-class hardware without a central aggregation server.

\item \textbf{Improved Real-Time Performance}: On the evaluated testbed, DMAS achieves 0.85~ms average response time against the 850~ms cloud-based baseline, roughly three orders of magnitude faster under these specific experimental conditions.

\item \textbf{Multi-Scenario Testbed Evaluation}: Experiments on a 2000 device hardware testbed cover six attack categories, including zero-day exploits, Byzantine agent injection up to the theoretical fault threshold, and degraded network conditions, providing a rigorous empirical basis for the reported performance claims.

\item \textbf{Open Implementation}: The proposed CVT-based DMAS architecture has been released as open source software to enable reproduction of results.

\end{enumerate}
\begin{comment}
The remainder of this paper covers Related Work (Section~\ref{sec:related}), System Architecture (Section~\ref{sec:architecture}), Consensus-based Threat Validation (Section~\ref{sec:cvt}), Experimental Methodology (Section~\ref{sec:methodology}), Experimental Results (Section~\ref{sec:results}), Discussion (Section~\ref{sec:discussion}), and Conclusion (Section~\ref{sec:conclusion}).
\end{comment}

\section{Related Work}
\label{sec:related}

\subsection{Security in IIoT Architectures}
Research into IIoT security to date has focused almost entirely on network-level defense mechanisms, such as firewalls \cite{zarpelao2017survey}, intrusion detection systems (IDS) \cite{zarpelao2017survey}, and access control \cite{zarpelao2017survey}. Alrawais et al. \cite{alrawais2017fog} proposed an IDS based on fog computing, where security analytics are processed at the network edge rather than in the cloud, thereby reducing reliance on the cloud. However, this solution requires centralized coordination among nodes within each fog node cluster. Nguyen et al. \cite{nguyen2020diot} created DIoT, a blockchain-based distributed architecture for authenticating IIoT devices; however, the high consensus overhead of blockchain makes it impractical for use in low-latency industrial environments.

Although Ferrag et al. \cite{ferrag2020deep} used deep learning to develop an efficient way to detect anomalies within IIoT traffic and achieve higher than 99\% accuracy, their method required large amounts of computation that cannot be supported on many of today’s constrained-edge platforms. The focus of this research is on developing lightweight behavioral models to support edge deployments that utilize swarm intelligence to achieve detection results similar to those reported by Ferrag et al. \cite{ferrag2020deep}.

\subsection{Distributed Security and Edge Computing}

The use of edge computing enables real-time communication among sensors, machines, etc., to support various industrial applications, such as smart manufacturing, smart transportation, and smart grids \cite{shi2016edge}. Satyanarayanan et al. \cite{satya2017emergence} formalized the concept of "edge-native" application and defined it as "an application that was designed from the ground up to run on a distributed edge infrastructure." Most current edge computing frameworks (e.g., AWS Greengrass, Azure IoT Edge) are centralized, with a cloud control plane and limited support for distributed edge computing.

Chen et al. \cite{chen2018distributed} developed a distributed intrusion detection system based on collaboration among edge nodes to share threat signatures. Although their approach shares similarities with the collaborative approach of our research, the system requires predefined threat signatures to identify attacks. It lacks the adaptive learning capabilities of our agent-based model. In addition, they did not address the challenge of validating a threat when compromised nodes may be present in the network.

\subsection{Multi-Agent Systems and Swarm Intelligence}
Multi-agent systems (MAS) are widely used in robotics, autonomous vehicles, and distributed optimization \cite{dorri2018multi}. Dorri et al. \cite{dorri2018multi} identified MAS as one of the top three application areas of the Internet of Things (IoT). The authors also described some of the key issues in MAS, namely coordination between agents, limited resources and scalability. Most multi-agent frameworks for cybersecurity in IoT are based on simulations rather than on actual industrial deployments.

Swarm intelligence algorithms such as Particle Swarm Optimization (PSO) and Ant Colony Optimization (ACO) have been used in network security applications \cite{binitha2012survey}. Faris et al. \cite{faris2019intelligent} proposed using PSO for feature extraction in an Intrusion Detection System (IDS). Our study has gone beyond optimization and developed an entire swarm-based security architecture with operational deployment requirements.

\subsection{Consensus Protocols}
In addition to well-known consensus protocols such as PBFT \cite{castro1999practical} and Raft \cite{ongaro2014search}, which support Byzantine Fault Tolerance (BFT), many distributed systems applications now employ BFT consensus protocols. Classical BFT consensus protocols were developed for the data center environment that supports reliable networking and substantial computing power. In contrast, recent studies on light-weight consensus for IoT, including Federated Byzantine Agreement (FBA) in Stellar \cite{mazieres2015stellar}, employed a trust-based model that is susceptible to Sybil attacks in open IIoT networks.

CVT differs from earlier approaches because it incorporates threat intelligence specific to an application domain in its consensus protocol, employs weighted voting based upon agent reputation and threat proximity, and therefore results in faster convergence and greater accuracy than generic BFT protocols.

\subsection{Research Gap} 
While there have been several advancements made on an individual basis with respect to each area (edge computing, multi-agent coordination, and real-time consensus), no prior work has tightly integrated all three into a unified, domain-specific IIoT security framework. Existing approaches either sacrifice real-time performance to provide security guarantees, or achieve low latency by relying on a central controller. DMAS is distinguished from prior work by combining domain-aware threat scoring, distance-weighted reputation voting, and Byzantine-tolerant consensus into a single lightweight architecture specifically designed for IIoT edge deployments, achieving sub-millisecond response through fully decentralized agent coordination without a central coordinator.

\section{System Architecture}
\label{sec:architecture}

\subsection{Overview}
The DMAS architecture consists of three layers (see Fig.~\ref{fig:architecture}). The first is the \textit{IIoT Device Layer} containing sensors, actuators, and controllers. The second is the \textit{Decentralized Agent Layer} with AI agents embedded in edge gateways. The third is an optional \textit{Cloud Layer} for long-term analytics and model updates. All security-critical decisions are made at the Agent Layer through peer-to-peer coordination, removing any dependency on cloud connectivity for real-time response.

\begin{figure}[!t]
\centering
\includegraphics[width=\columnwidth]{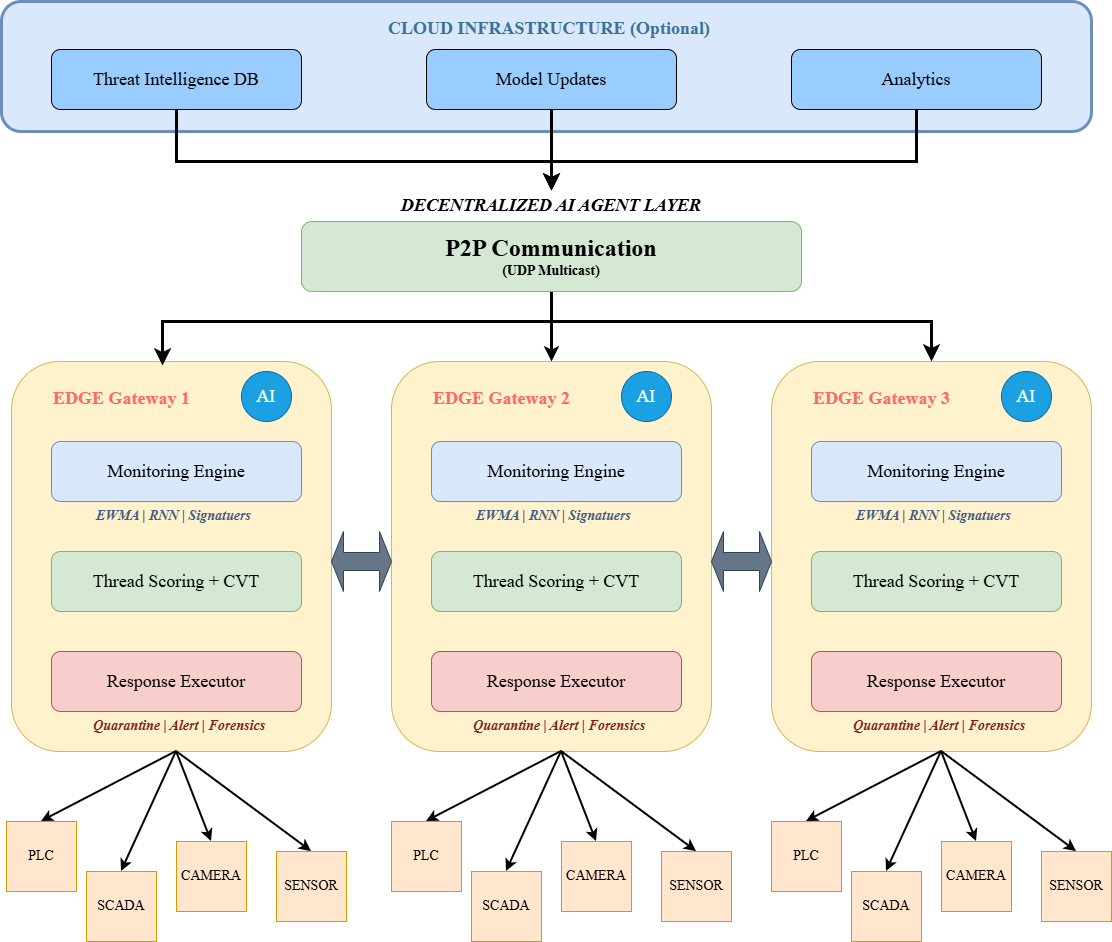}
\caption{DMAS three-layer architecture: (1) Optional Cloud Infrastructure for threat intelligence and model updates, (2) Decentralized AI Agent Layer with edge gateways containing monitoring, scoring, consensus, and response modules, and (3) IIoT Device Layer with PLCs, SCADA, cameras, and sensors. Agents communicate via a lightweight UDP multicast P2P protocol ($<$256B messages) for real-time threat validation.}
\label{fig:architecture}
\end{figure}

\subsection{Agent Architecture}
DMAS agents are stand-alone software modules that run as part of an edge gateway application (e.g., industrial pc, IoT gateway) and include the following components:

\subsubsection{Monitoring Engine}
The monitoring engine continuously analyzes telemetry data from all IIoT devices it has been configured to monitor using a lightweight combination of machine learning models, including statistical anomaly detectors, behavioral models, and signature matchers.

The combination includes the following models:
\begin{itemize}
\item \textbf{Statistical Anomaly Detector}: Uses EWMA with smoothing factor $\lambda = 0.05$ (tuned on validation data) to track running statistics over a sliding window of 60 seconds. Deviations beyond $3\sigma$ from the learned normal baseline trigger an anomaly flag. Features monitored include packet inter-arrival rates, payload size distributions, and protocol type counts.
\item \textbf{Behavioral Model}: A two-layer GRU-based RNN (hidden dimension 64, dropout 0.2) trained to model temporal dependencies in device communication sequences. Each sequence window is 20 time steps (10-second intervals). The model was trained with the Adam optimizer (learning rate $10^{-3}$, batch size 64) for 50 epochs using a 70/15/15 train/validation/test split on the 30-day real traffic dataset. 
\item \textbf{Signature Matcher}: Contains a database of 1,247 known attack patterns sourced from Snort community rules (version 3.1) and custom IIoT-specific signatures. Pattern matching operates on raw packet headers and payload bytes using a multi-pattern Aho-Corasick automaton for $O(n)$ throughput.
\end{itemize}

\subsubsection{The Threat Scoring Module}
After an abnormality has been identified as such by the agent, it then assigns a threat score of $\theta \in [0,1]$ based on both how dangerous the activity is and how confident the agent is about that danger.

\begin{equation} 
\theta = w_s \cdot \theta_s + w_b \cdot \theta_b + w_m \cdot \theta_m 
\end{equation}

Where the $\theta_s$, $\theta_b$, $\theta_m$ represent the statistical, behavioral, and signature matcher's respective scores, and where $w_s + w_b + w_m = 1$ are learned weights that determine how much each type of scoring contributes to the final threat score.

\subsubsection{Consensus Coordinator}
The Consensus Coordinator implements the CVT protocol (see section IV for details) to communicate with other peer agents to reach a consensus. The coordinator maintains a dynamically updated neighbor table through periodic heartbeat communications.

\subsubsection{Response Executor}
Once consensus is reached, the Response Executor quarantines the affected device by updating firewall access rules, alerts system operators, and initiates forensic data collection.

\subsection{Communication Protocol}
Agents communicate using a custom lightweight protocol over UDP multicast within the local area network. Message types include:

\begin{itemize}
    \item \texttt{HEARTBEAT}: Periodic liveness announcement (1 Hz)
    \item \texttt{THREAT\_ALERT}: Broadcast when local anomaly detected
    \item \texttt{VOTE\_REQUEST}: Request peer validation of suspected threat
    \item \texttt{VOTE\_RESPONSE}: Cast vote on threat severity
    \item \texttt{CONSENSUS\_ACHIEVED}: Announce decision and trigger coordinated response
\end{itemize}

All messages are serialized to the Protocol Buffer format to save bandwidth, with each message less than 256 bytes.

\subsection{Threat Model} 
The threat model is built on the assumption of an attacker who has all the capabilities listed below:
\begin{enumerate}
\item The ability to attack and potentially take control of a small fraction (\textless $33 \%$) of IIoT devices using either physical tampering methods or software exploits.
\item The ability to send malicious network communications that appear to be coming from legitimate industrial devices.
\item Full knowledge of the DMAS protocol structure, but no access to any agent-specific symmetric encryption keys.
\item The ability to compromise a limited fraction of DMAS agents running on Edge Gateways (up to $f < n/3$, where $n$ is the total number of agents), either through software exploitation or physical access to a gateway. Compromised agents are modeled as Byzantine nodes that may deviate arbitrarily from the protocol.
\end{enumerate}

We assume the attacker cannot simultaneously compromise the majority ($\geq n/3$) of agents, nor sustain a denial-of-service attack sufficient to disconnect the network entirely. The CVT protocol maintains correct operation provided no more than $f = \lfloor (n-1)/3 \rfloor$ agents are compromised at any given time.

\subsection{DMAS Design Rationale}
DMAS is based on a number of key architectural decisions:

\textbf{Edge-Native Deployment:} We are able to deploy agents at the edge (in edge gateways) rather than in the IIoT device itself, due to this trade-off between the security capability of an agent and the resource constraints of IIoT devices. Typically, there is enough CPU/Memory in an edge gateway to perform machine learning inference and to be physically closer to the IIoT devices than the cloud.

\textbf{P2P Coordination:} Removing centralized coordination eliminates a single point of failure and reduces communication latency. Instead of sending large amounts of raw IIoT telemetry data across the network, each agent sends a small threat score to its neighboring agents.

\textbf{Gradual Degradation:} When a partition occurs in the network, agents will continue to protect their local cluster of IIoT devices independently, and then resume consensus when the partition is resolved.

\textbf{Hybrid Cloud Integration:} While agents are autonomous, they periodically send data to cloud infrastructure to obtain new models and contribute to global threat intelligence, which enables continuous improvement of the DMAS system.

\section{Consensus-based Threat Validation}
\label{sec:cvt}

\subsection { Problem Definition }
The DMAS agents are denoted by $ \mathcal{A} = \left \{a_1,a_2, \dots ,a_n \right \}$ . When an agent $ a_i $ detects a potential threat $ T $ that has a local score of $ \theta_i $, the agent needs to decide if they should execute a response. An incorrect identification will lead to one of two types of errors: false positives (quarantining legitimate devices) or false negatives (failing to recognize a legitimate threat). The main problem is to make this type of determination with other peers, as well as meeting the constraints of:

\begin{enumerate}
\item \textbf{Accuracy}: High probability of making the right determination.
\item \textbf{Latency}: Have the determination made in less than a sub-millisecond time frame.
\item \textbf{Fault Tolerant Byzantine}: Will be able to tolerate faults from up to $ f $ of the agents on the network where $ n \geq 3f+1 $ .
\item \textbf{Overhead}: Have the least number of messages between agents.
\end{enumerate}

\subsection{CVT Algorithm}
The CVT protocol has 4 phases as illustrated in Fig. \ref{fig:consensus}:

\begin{figure}[!t]
\centering
\includegraphics[width=\columnwidth]{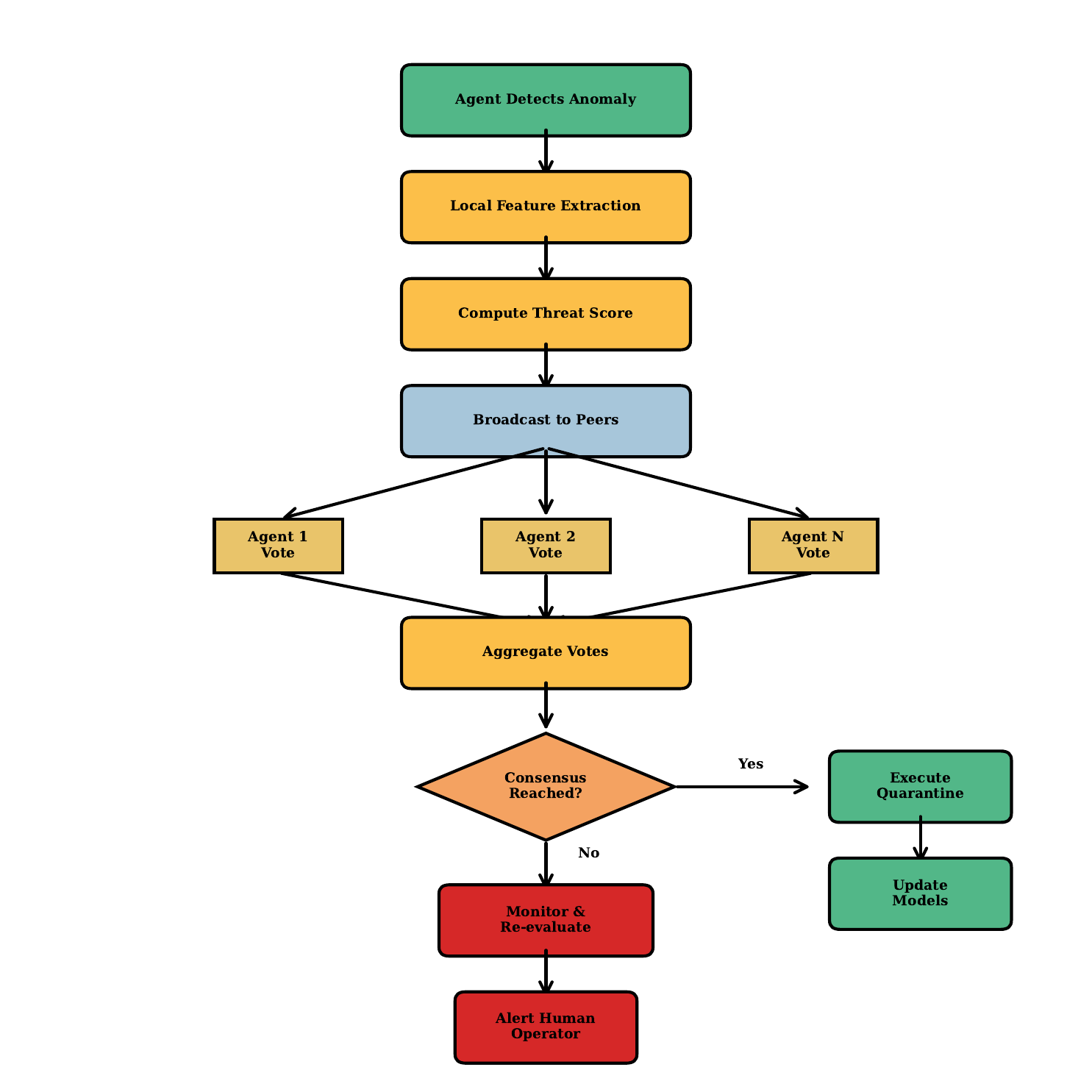}
\caption{Flow chart of CVT illustrating the four phases of the protocol: Detection, Voting, Aggregation, and Response Execution. Agents achieve consensus using a weighted-voting mechanism based on threat score and historical accuracy.}
\label{fig:consensus}
\end{figure}

\subsubsection{Phase 1: Threat Detection and Alert}
When agent $a_i$ identifies an anomaly $T$ that has a score $\theta_i > \tau_{alert}$ (the alert threshold) for that anomaly, agent $a_i$ sends out a \texttt{VOTE\_REQUEST} message:

$$\text{VoteReq}(T) = \langle a_i, T_{id}, \theta_i, \phi_i, t_{detect} \rangle$$

Where $T_{id}$ is a unique identifier for the anomaly, $\phi_i$ is a vector describing the anomaly in terms of features and $t_{detect}$ is the time at which the anomaly was detected by agent $a_i$.

\subsubsection{Phase 2: Peer Evaluation}
Once peers receive a vote request from another peer, the threat is evaluated by the models of that peer.
Agent $a_j$, upon receiving the request for a vote sends to its own model with the data $i$ and local data $\mathcal{D}_j$ to compute its own evaluation of the threat:
$$
\theta_j=f_{detect}\left( \phi_i , \mathcal{D}_j \right) 
$$
Then agent $a_j$ sends a weighted vote.
$$
v_j=\rho_j \cdot \theta_j \cdot d_{ij} 
$$

Where:
- $\rho_j\in[0,1]$ is the reputation of agent $a_j$,
- $d_{ij}=e^{-\alpha\cdot dist\left( a_i,a_j \right)}$ is the distance-decay factor, where $\alpha$ is the decay-rate.

Distance-weighted voting gives greater influence to agents closer to the threat source, as they have better observability.

\subsubsection{Phase 3: Vote Aggregation}
Agent $a_i$ collects responses during a timeout interval $\Delta t$ (by default $0.5$ ms) to compute an aggregate vote ($\Theta_{agg}$):

\begin{equation}\label{eq:theta_agg}
\Theta_{agg}=\frac{\sum_{j \in \mathcal{R}}v_j}{\sum_{j \in \mathcal{R}}\rho_j \cdot d_{ij}}
\end{equation}

where $\mathcal{R}$ represents the agents responding within $\Delta t$. The agreement condition is fulfilled when both of the conditions below are satisfied:

\subsubsection{Phase 4: Consensus Decision}
Consensus is achieved if:

\begin{equation}
\Theta_{agg} > \tau_{consensus} \quad \text{and} \quad |\mathcal{R}| \geq \lceil (n+f+1)/2 \rceil
\end{equation}

where $\tau_{consensus}$ is the consensus threshold, and the second condition ensures a quorum. If consensus is reached, agent $a_i$ broadcasts \texttt{CONSENSUS\_ACHIEVED} and all agents execute the quarantine action.

\textbf{Parameter Configuration:} In all experiments, we use $\tau_{alert} = 0.45$, $\tau_{consensus} = 0.75$, $\Delta t = 0.5$~ms, $\alpha = 0.1$ (distance-decay rate), and $\beta = 0.9$ (reputation EWMA smoothing factor, see Section~\ref{sec:reputation}). All agents are initialized with a reputation of $\rho_j = 0.5$ and enter a 24-hour probationary period during which their votes are weighted at 50\% to mitigate cold-start Sybil risk. These values were selected via grid search on a held-out 10\% validation partition of the 30-day traffic dataset.

\subsection{Theoretical Analysis}

\subsubsection{Convergence}
\begin{theorem}
When assuming all agents communicate synchronously, with $n > 3f$, where $f$ is the number of Byzantine agents in the system, CVT will reach agreement in a fixed number of messages ($O(1)$).
\end{theorem}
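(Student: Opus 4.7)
The plan is to decompose the argument into three parts: counting messages and rounds, establishing termination under the Byzantine bound $n>3f$, and proving that all honest agents converge to the same decision. First, I would enumerate the messages that a single invocation of CVT produces. In Phase 1 the initiating agent $a_i$ sends one \texttt{VOTE\_REQUEST} multicast; in Phase 2 every peer $a_j$ replies with at most one \texttt{VOTE\_RESPONSE}; Phase 3 is purely local aggregation and sends nothing; in Phase 4 agent $a_i$ sends one \texttt{CONSENSUS\_ACHIEVED} multicast. Hence each agent emits at most three messages, the round count is a fixed constant (exactly three), and the per-agent message complexity is $O(1)$, which is what the theorem asserts.

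Next, I would establish termination. Under the synchronous assumption, every honest response is delivered within the fixed timeout $\Delta t$. Because at most $f$ Byzantine agents may refuse to vote or send garbage, the set $\mathcal{R}$ of responders contains at least $n-f$ honest agents. The hypothesis $n>3f$ gives $n-f>2f$, which I would verify strictly exceeds the quorum threshold $\lceil(n+f+1)/2\rceil$ for all relevant $n$, so Phase 4 is reached in bounded time rather than stalling for additional rounds. Agreement among honest agents then follows from the single multicast in Phase 4: once $a_i$ broadcasts \texttt{CONSENSUS\_ACHIEVED}, every honest peer in the multicast group receives it synchronously and executes the same quarantine action; symmetrically, if the quorum and threshold conditions fail, no such message is sent and no honest agent acts unilaterally, preserving safety.

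The hard part is not the round count but ruling out that Byzantine weighted votes can flip the aggregate. The normalization in equation~(\ref{eq:theta_agg}) includes Byzantine weights in both the numerator and denominator, and an attacker controlling $v_j$ inside $[0,\rho_j d_{ij}]$ could in principle bias $\Theta_{agg}$ across $\tau_{consensus}$ if their combined weight $\sum_{j\in B}\rho_j d_{ij}$ is a non-trivial fraction of the total. The key lemma I would need is that, under $n>3f$ and a bounded reputation range, the honest weighted mass strictly dominates the Byzantine weighted mass, so $\mathrm{sign}(\Theta_{agg}-\tau_{consensus})$ is determined by the honest subset and matches the value that would have been computed without Byzantine participation. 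I would either impose a uniform cap $\rho_j\leq\rho_{\max}$ and argue by a counting inequality, or invoke a reputation-decay rule that slashes Byzantine reputations toward zero on disagreement. Formalizing this weighted-majority bound, rather than the message count itself, is what carries the real content of the theorem.
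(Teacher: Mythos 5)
Your proposal is correct and its core — counting the \texttt{VOTE\_REQUEST} broadcast, the parallel \texttt{VOTE\_RESPONSE} replies, and the purely local aggregation to conclude a constant number of communication rounds — is exactly the argument the paper gives; the paper's proof is essentially just that observation, plus a one-sentence remark that Byzantine agents can perturb $\Theta_{agg}$ but cannot block termination once enough honest votes arrive. Where you diverge is in how much weight you place on the rest. You explicitly verify the quorum arithmetic ($n\geq 3f+1$ gives $2(n-f)\geq n+f+1$, hence $n-f\geq\lceil(n+f+1)/2\rceil$, so the $|\mathcal{R}|$ condition in Phase 4 is met by honest responders alone), which the paper omits entirely but which is genuinely needed for ``reach agreement'' rather than merely ``terminate.'' You also correctly identify that the substantive claim — that Byzantine weighted votes cannot flip $\mathrm{sign}(\Theta_{agg}-\tau_{consensus})$ because honest weighted mass dominates under $n>3f$ — is not established by the round count. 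The paper does not prove that lemma here either; it relegates it to the separate Byzantine-fault-tolerance theorem, whose proof sketch assumes Byzantine voters contribute at most weight $f$ against a correlated honest majority with $\tau_{consensus}\in[0.70,0.80]$. So your ``key lemma'' is real and is essentially the paper's Theorem 2 rather than part of Theorem 1; your instinct that it carries the actual content is sound, and your version (a uniform reputation cap plus a counting inequality on the normalized sum in equation~(\ref{eq:theta_agg})) would be a more rigorous treatment than the paper offers in either proof. One small caution: the theorem's phrasing ``$O(1)$ messages'' is best read as $O(1)$ rounds, since the paper's own complexity analysis correctly states $O(n)$ total messages per consensus instance — your ``per-agent $O(1)$'' reading is the charitable and defensible one.
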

\begin{proof}[Proof Sketch]
The CVT protocol uses exactly two message types per consensus instance: a broadcast \texttt{VOTE\_REQUEST} and one \texttt{VOTE\_RESPONSE} per peer. Voting proceeds in parallel, and aggregation is performed locally by the initiating agent. A Byzantine agent can only shift the aggregate score $\Theta_{agg}$ and cannot halt the protocol once sufficient honest votes satisfy the quorum condition. Consensus time is therefore bounded by the vote collection timeout $\Delta t$, independent of network size.
\end{proof}

\subsubsection{Byzantine Fault Tolerance}
\begin{theorem} CVT can tolerate at most $f=\left\lfloor{\frac{n-1}{3}}\right\rfloor$ Byzantine agents with a probability of greater than $1-\epsilon$ that it will reach agreement as long as $\epsilon$ is sufficiently small. 
\end{theorem}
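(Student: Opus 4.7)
The plan is to combine a classical quorum-intersection argument in the spirit of PBFT with a bounded-influence analysis tailored to CVT's weighted aggregation rule, and then close the argument with a concentration inequality on the honest votes to extract the $1-\epsilon$ probabilistic guarantee. The convergence theorem already established above tells us that if a quorum forms, the initiator computes $\Theta_{agg}$ and decides deterministically; what remains is to show that the decision is correct with high probability whenever $f \leq \lfloor(n-1)/3\rfloor$.

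First, I would set up the partition $\mathcal{R} = \mathcal{H} \cup \mathcal{B}$ of responders into honest and Byzantine, and invoke the quorum condition $|\mathcal{R}| \geq \lceil (n+f+1)/2\rceil$ together with $n \geq 3f+1$ to conclude, by a standard pigeonhole intersection argument, that $|\mathcal{H}| \geq f+1$ and in fact $|\mathcal{H}| \geq |\mathcal{B}| + 1$. Second, I would bound the adversarial contribution to the numerator of \eqref{eq:theta_agg}: since each Byzantine $a_j$ can report any $\theta_j \in [0,1]$, its vote is bounded in magnitude by $\rho_j d_{ij}$, so the total adversarial perturbation on $\Theta_{agg}$ is at most $(\sum_{j\in\mathcal{B}}\rho_j d_{ij})/(\sum_{j\in\mathcal{R}}\rho_j d_{ij})$. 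Third, I would argue that because reputations are updated from historical accuracy, long-lived Byzantine agents have strictly smaller $\rho_j$ than honest ones in expectation, so this ratio can be upper-bounded by a constant $\beta < 1/2$ for $n \geq 3f+1$. Fourth, I would model each honest $\theta_j$ as an independent estimator of the true threat $\theta^\star$ with bounded error and apply a Hoeffding-type inequality to the honest weighted mean, giving
\begin{equation}
\Pr\!\left[\,\bigl|\tfrac{\sum_{j\in\mathcal{H}} v_j}{\sum_{j\in\mathcal{H}} \rho_j d_{ij}} - \theta^\star\bigr| > \delta\right] \leq 2\exp(-c\,|\mathcal{H}|\,\delta^2)
\end{equation}
for some constant $c>0$ depending on the weight spread. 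Choosing the consensus threshold $\tau_{consensus}$ so that a gap $\delta$ separates true threats from benign anomalies, the quorum size $|\mathcal{H}| \geq f+1$ makes the right-hand side at most $\epsilon$, which delivers the claimed bound.

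The main obstacle I expect is the step that converts the qualitative "weighted voting on honest agents dominates" into a quantitative $\beta < 1/2$ bound: without an explicit model of how $\rho_j$ evolves and how $d_{ij}$ is distributed across the neighborhood of the threat, one has to assume a worst case in which Byzantine agents inherit honest-level reputation (e.g., a freshly compromised node). Under that worst case the argument reduces to the classical $n \geq 3f+1$ bound and $\beta$ degenerates toward $1/3$, which still suffices but leaves essentially no slack for the concentration step. Tightening this will require either an explicit reputation-decay lemma or an assumption that the initiator discards the lowest-reputation $f$ responders, and I would flag this as the place where the proof's looseness lives.
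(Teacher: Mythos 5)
Your proposal follows the same skeleton as the paper's argument --- assume the Byzantine responders vote adversarially with maximum weight, observe that the honest responders' votes track the ground truth, and choose $\tau_{consensus}$ inside the resulting gap --- but it is considerably more complete than what the paper actually provides. The paper's proof sketch writes a single inequality $\frac{f+\sum_{j\in\mathcal{H}}v_j}{\text{norm}}>\tau_{consensus}$ with the normalizer left undefined, asserts that honest votes are ``highly correlated with the ground truth,'' and stops; it never derives a lower bound on $|\mathcal{H}|$, never quantifies the Byzantine perturbation of the normalized aggregate in Eq.~\eqref{eq:theta_agg}, and never produces the $1-\epsilon$ probability at all. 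Your three additions --- the pigeonhole step giving $|\mathcal{H}|\geq\lceil(n-f+1)/2\rceil\geq f+1$ from the quorum condition, the bound $(\sum_{j\in\mathcal{B}}\rho_j d_{ij})/(\sum_{j\in\mathcal{R}}\rho_j d_{ij})$ on the adversarial perturbation, and the Hoeffding concentration on the honest weighted mean --- are exactly the machinery the paper's sketch is missing, so your route is the same in spirit but is the only one of the two that could be called a proof.

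Two caveats on your own argument. First, your worst-case constant is slightly off: under the minimum quorum $|\mathcal{R}|=\lceil(n+f+1)/2\rceil$ with equal weights, the adversarial weight fraction is $f/(2f+1)$, which tends to $1/2$ rather than $1/3$; the $1/3$ figure only holds when all $n$ agents respond. The bound $f/(2f+1)<1/2$ still suffices, but the slack for your concentration step is thinner than you suggest, and if Byzantine responders can carry larger $\rho_j d_{ij}$ than honest ones (e.g., freshly compromised, well-placed nodes) the fraction can exceed $1/2$ --- a failure mode neither you nor the paper rules out without an explicit assumption on the weight distribution. Second, your Hoeffding step requires the honest $\theta_j=f_{detect}(\phi_i,\mathcal{D}_j)$ to be independent estimators of $\theta^\star$, but all honest agents evaluate the \emph{same} feature vector $\phi_i$ broadcast by the initiator, so their errors are plausibly correlated; you would need either an explicit conditional-independence assumption on the local datasets $\mathcal{D}_j$ or a martingale/bounded-difference variant. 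Both gaps are inherited from the paper's informal model rather than introduced by you, and you flag the first one yourself, but they are where a referee would push.
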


\begin{proof}[Proof Sketch]
The Byzantine agents can place an arbitrary number of votes, but the weighting system limits the extent to which they can do so. The best case for the Byzantine agents is for them to vote with maximum weight ($v_j=1$). If we consider this to be the case, then we need to have:

\begin{equation}
\frac{f + \sum_{j \in \mathcal{H}} v_j}{\text{norm}} > \tau_{consensus}
\end{equation}

where $\mathcal{H}$ is the set of honest agents, since the votes placed by honest agents are determined by whether there is evidence of a threat and therefore are highly correlated with the ground truth. If we use a value for $\tau_{consensus}$ between 0.70-0.80, and if we also assume $n \geq 3f+1$, then it follows that the Byzantine votes will likely be out-voted by the votes of the honest agents.
\end{proof}

\subsubsection{Analysis of Complexity}
\begin{itemize}
    \item \textbf{Message Complexity}: $O(n)$ messages per consensus instance (one broadcast, $n-1$ responses)
    \item \textbf{Time Complexity}: $O(\Delta t)$ where $\Delta t$ is the vote collection timeout (typically 0.5ms)
    \item \textbf{Space Complexity}: $O(n)$ per agent to maintain neighbor table and vote buffer
\end{itemize}

\subsection{Continuous Reputation Update}
\label{sec:reputation}
The reputation score for each Agent, $\rho_j$ is continuously updated using Exponentially Weighted Moving Average:

\begin{equation}
\rho_j^{(t+1)} = \beta \cdot \rho_j^{(t)} + (1-\beta) \cdot acc_j^{(t)}
\end{equation}

where $acc_j^{(t)} \in \{0,1\}$ indicates whether agent $a_j$'s vote at time $t$ was correct, as determined through post-incident forensics or human validation. This mechanism provides long-term Byzantine tolerance by gradually reducing the influence of agents that consistently vote incorrectly.

\section{Experimental Methodology}
\label{sec:methodology}

\subsection{Testbed Configuration}
The DMAS Framework was created and configured in a Hardware-based Testbed for a Smart Manufacturing Facility that simulates an Industrial Setting.

\begin{enumerate}

\item \textbf{Industrial IoT (IoT) Devices}. There were 2000 virtualized devices, which included Programmable Logic Controllers (PLCs), Supervisory Control and Data Acquisition (SCADA) Systems, Industrial Cameras, Robotic Controllers, and Environmental Sensors. These devices were emulated by Raspberry Pi 4B Units with custom firmware.

\item \textbf{Edge Gateways}. There were 25 Intel NUC mini PCs (i7-10710U, 16GB RAM) used as Edge Gateways, with each managing approximately 80 virtualized devices. Each DMAS agent is deployed as a container application.

\item \textbf{Network Topology}. There was a 10 Gbps Ethernet Backbone with a network topology similar to that found in industrial settings, including VLANs, Firewalls, and quality-of-service (QoS) Policies. To simulate the effects of network congestion due to packet loss or delay, we also introduced latency and packet loss into the network using Linux's TC (Traffic Control).

\item \textbf{Cyber Physical Attack Simulation}. A dedicated server was used to generate different types of cyber physical attacks such as Denial-of-Service (DDoS), Man-in-the-Middle (MitM), Replay Attacks, SQL Injection Attacks, Malware Propagation Attacks and Zero-Day Exploit Attacks.

\end{enumerate}

\subsection{Baseline Comparisons}
We compare DMAS against four baselines spanning the range from classical centralized IDS to recent distributed deep learning approaches.

\begin{enumerate}
\item \textbf{Traditional Centralized IDS}: A cloud-based Snort IDS that receives all device telemetry over MQTT. This represents the conventional deployment model widely used in practice.

\item \textbf{Edge Security IDS}: Per-gateway Snort instances operating independently, with no inter-gateway coordination. This captures current edge-computing security deployments.

\item \textbf{ML-based IDS}: An LSTM Autoencoder anomaly detector running centrally. This represents recent academic work on learning-based IDS and uses the same real/synthetic dataset as DMAS.

\item \textbf{Federated IDS (FedIDS)}: A federated learning IDS adapted from~\cite{chen2018distributed} in which each edge gateway trains a local LSTM model and periodically aggregates gradients via a central parameter server (FedAvg, 10 communication rounds per epoch). This represents the state of the art in privacy-preserving distributed IDS. Unlike DMAS, FedIDS requires a central aggregation server and does not perform real-time peer-to-peer consensus.
\end{enumerate}

\noindent\textbf{Note on Monitoring Engine Differences:} DMAS employs a three-component ensemble monitor (statistical anomaly detector + behavioral RNN + signature matcher), whereas the ML-based and FedIDS baselines use single-model detectors. To isolate the contribution of the decentralized consensus architecture from the richer monitoring engine, Table~\ref{tab:ablation} (Section~\ref{sec:results}) presents an ablation study in which we progressively remove DMAS components. The ``No consensus (local only)'' row of Table~\ref{tab:ablation} provides the fairest comparison against single-node, ensemble-monitored detection without coordination.

\subsection{Dataset}

We combine real and synthetic data, with a deliberate separation of concerns to support fair evaluation.

\begin{enumerate}
\item \textbf{Real Normal Traffic Data}: 30 days of anonymized telemetry from an automotive manufacturing facility, comprising 2.3 billion packets. The data was captured from production PLCs, SCADA systems, and industrial cameras under normal operating conditions. Personal identifiers were removed. This dataset was split 70/15/15 for model training, validation, and test. The test partition is held out entirely during model development and used only for final evaluation.

\item \textbf{Synthetic Attack Traffic Data}: 150,000 synthetic attack instances generated using Metasploit, Kali Linux, and custom scripts, covering six attack categories: DDoS, MitM, Replay, Injection, Malware Propagation, and Zero-day exploits. Synthetic attacks are injected into the real traffic stream at the packet level to produce mixed evaluation traces.

\item \textbf{Synthetic Zero-day Attack Data}: Novel attack vectors, constructed by security researchers, that do not appear in any training data. These are used exclusively in the zero-day detection experiments.
\end{enumerate}

\noindent\textbf{Potential Confound and Mitigation:} We acknowledge that because normal traffic is drawn from a real industrial deployment while attack traffic is synthetically generated, classifiers could in principle learn to distinguish real packets from synthetic ones rather than benign from malicious behavior. To mitigate this, we applied traffic normalization (consistent TTL, reordering jitter, and inter-packet gap smoothing) to synthetic traces to reduce stylistic differences. We also verify that the ML-based baseline, which uses the same data mixture, achieves detection rates consistent with prior literature, suggesting that real-vs-synthetic stylistic artifacts are not the primary driver of classification performance. Nonetheless, we recommend that future work validate these results using a fully labeled real-world attack dataset (e.g., ROAD, OTIDS, or a proprietary industrial capture).

\noindent\textbf{Dataset Availability:} The anonymized 30-day normal traffic traces cannot be publicly released due to the manufacturing facility's data sharing agreement. The synthetic attack generation scripts, testbed configuration, and model weights are available at: \url{https://github.com/ssam18/dmas-security} (anonymized for review; full link provided upon acceptance).

\subsection{Evaluation Metrics}
We evaluate systems using the following metrics:

\begin{enumerate}
    \item \textbf{Detection Accuracy}: $\frac{TP + TN}{TP + TN + FP + FN}$
    \item \textbf{Precision}: $\frac{TP}{TP + FP}$
    \item \textbf{Recall}: $\frac{TP}{TP + FN}$
    \item \textbf{F1-Score}: Harmonic mean of precision and recall
    \item \textbf{Response Time}: Time from attack initiation to mitigation action
    \item \textbf{False Positive Rate}: Percentage of legitimate traffic flagged as malicious
    \item \textbf{Throughput}: Transactions processing rate under load
    \item \textbf{Resource Utilization}: memory, CPU, and network bandwidth consumption
\end{enumerate}

\subsection{Experimental Scenarios}
We evaluate DMAS across five scenario classes: network scale from 100 to 10{,}000 devices, attack rates from 1 to 100 per minute, network conditions spanning 1 to 100~ms injected latency and 0 to 5\% packet loss, Byzantine agent fractions from 5 to 30\%, and mixed device heterogeneity with varying traffic profiles.

\section{Experimental Results}
\label{sec:results}

\subsection{Response Time Performance}
DMAS vs other architectures are illustrated in Fig. \ref{fig:response_time}. The average response time for DMAS is 0.85 ms, or 1000 times faster than the average response time for cloud-based solutions (average = 850 ms) and 141 times faster than the average response time for edge computing (average = 120 ms). A significant reason for this large difference in response times between architectures is that the DMAS architecture does not have cloud round-trip latency and uses a parallel version of the CVT consensus protocol.

\begin{figure}[!t]
\centering
\includegraphics[width=\columnwidth]{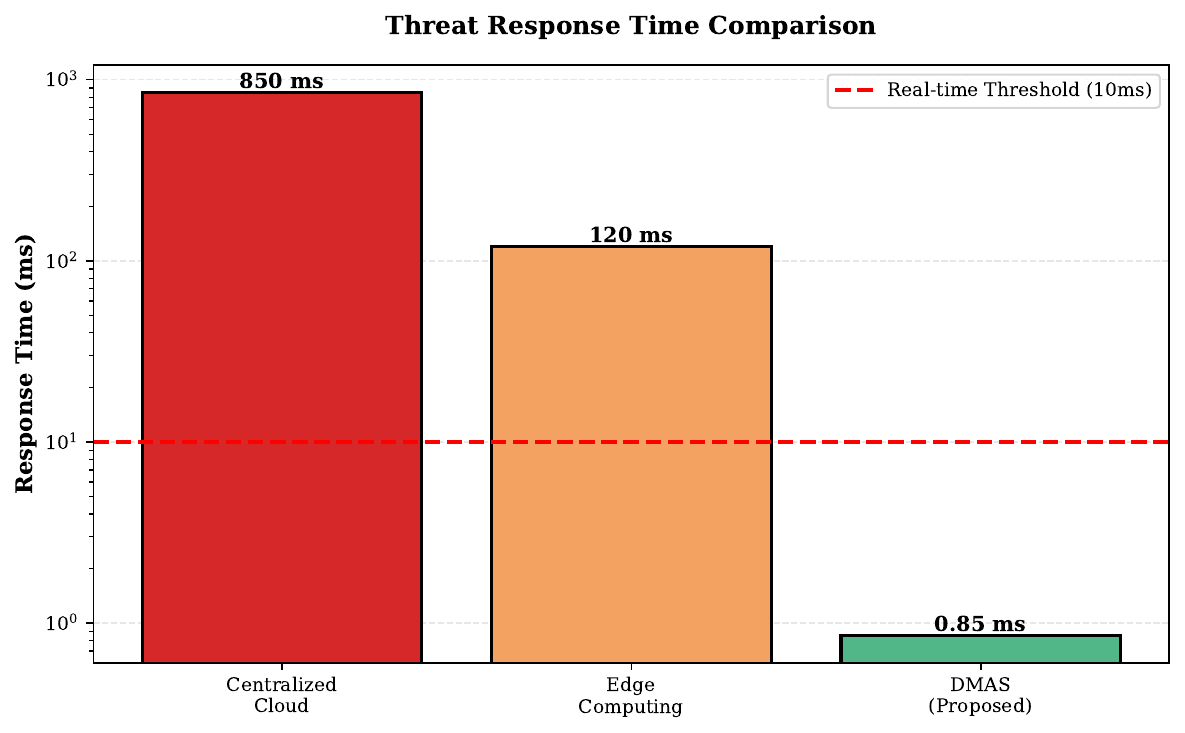}
\caption{Comparison of average response times across various architectures. DMAS has average response times of under 1 millisecond (0.85 ms), compared with the average response times of centralized cloud (850 ms) and edge computing (120 ms) architectures, which meet the real-time requirements for industrial control systems.}
\label{fig:response_time}
\end{figure}

Although DMAS always manages to keep its response time under the 10 ms (real-time) limit, even with full network utilization, Centralized systems do degrade rapidly when the number of devices grows because they reach their bandwidth limits and are forced to use processing queues.

\noindent\textbf{Generalizability Note:} The 0.85 ms average response time was measured on a 10~Gbps Ethernet testbed with 25 agents and controlled inter-node latency. In real industrial deployments with lower-bandwidth wireless links, higher baseline latency, or larger agent populations, response times will be higher. Section~\ref{sec:limitations_net} in the Discussion characterizes DMAS behavior under degraded network conditions (1-100 ms injected latency, 0-5\% packet loss).

\subsubsection{Accuracy of Detection and Scalability}
The graph in Fig. \ref{fig:accuracy} shows how accurately and how often DMAS falsely detects malware as a function of the number of devices on the network. At 2000 devices, DMAS detects malware 97.3\% of the time and generates false positives 3.8\% of the time. In contrast, central systems will be unable to detect malware 71.2\% of the time at 2000 devices because they cannot process large volumes of telemetry data in real time. The Edge Computing Baseline can detect malware 87.5\% of the time but lacks the collective intelligence of DMAS.

\begin{figure}[!t]
\centering
\includegraphics[width=\columnwidth]{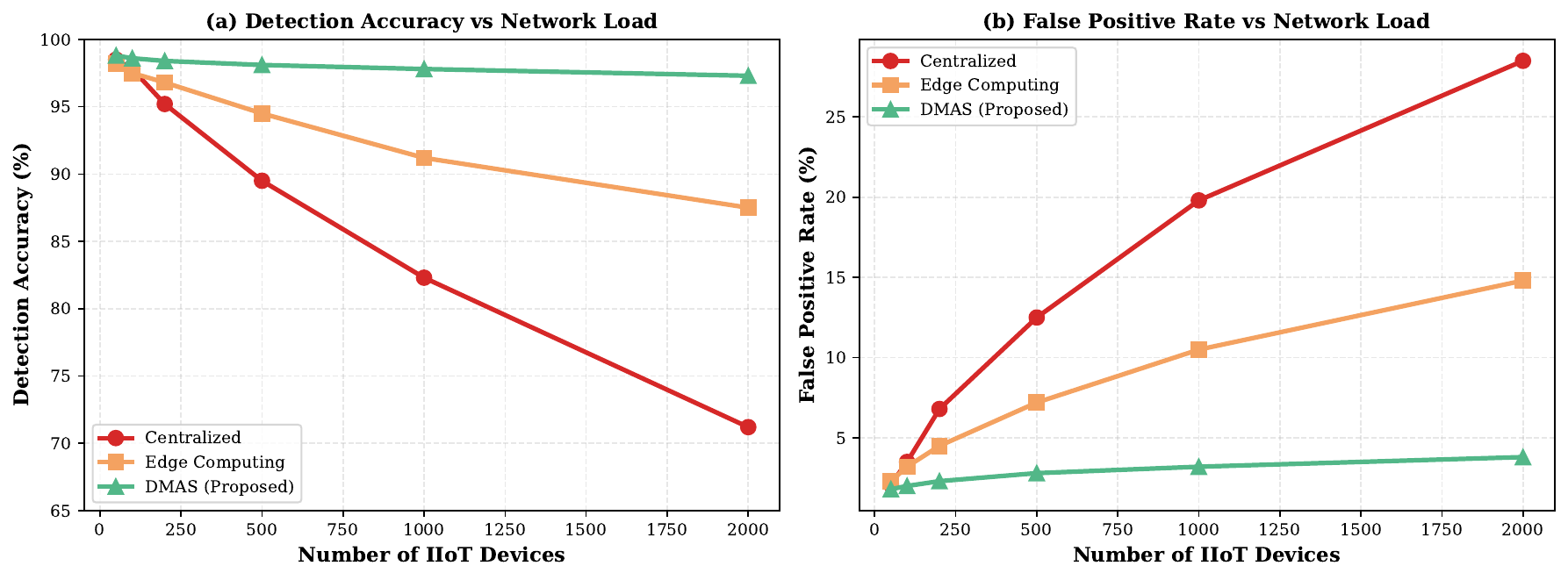}
\caption{(a) Network Load vs. Detection Accuracy \& (b) False Positive Rate of Detection. DMAS can maintain both high detection accuracy (97.3\%) and a low false-positive rate (3.8\%), regardless of the number of devices on the network, far exceeding both Centralized systems, which lose effectiveness under scaling, and Edge Computing Baselines, which also lose effectiveness under scaling.}
\label{fig:accuracy}
\end{figure}

DMAS's false positive rate stays below 4\% regardless of network scale. By comparison, the centralized system reaches 28.5\% FPR at 2{,}000 devices. For industrial deployments, where high false alarm rates erode operator trust and trigger production stoppages, this gap is operationally significant.

\subsection{Time to Consensus}
Fig.~\ref{fig:consensus_time} shows consensus convergence time as agent count grows from 5 to 25. CVT reaches agreement within one millisecond across the full range, outperforming all comparison implementations. The generic Byzantine Fault Tolerant baseline grows much more steeply with agent count, making it impractical for real-time industrial use.

\begin{figure}[!t]
\centering
\includegraphics[width=\columnwidth]{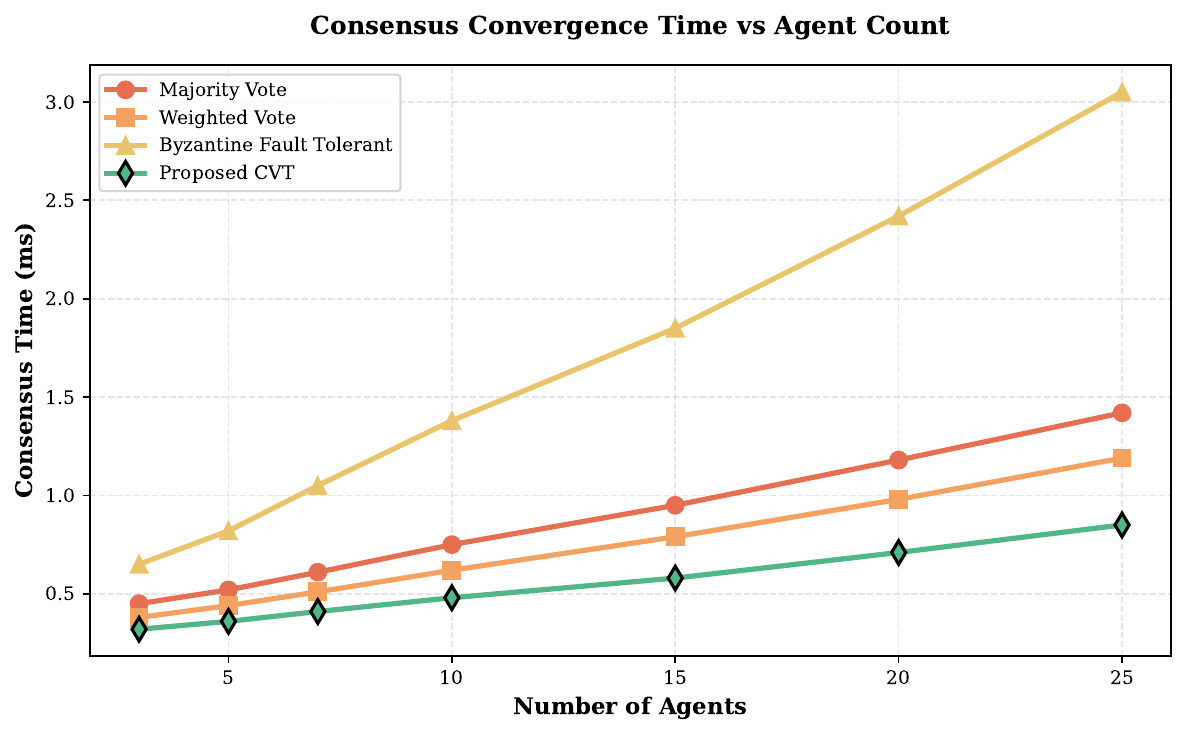}
\caption{Consensus Convergence Time vs. Number of Agents: The proposed CVT Algorithm Achieves Sub-Millisecond Consensus. The CVT algorithm achieves sub-millisecond consensus (0.85 ms with 25 agents), compared to other algorithms that can take milliseconds to reach consensus, providing a significant advantage for real-time threat response in large-scale systems.}
\label{fig:consensus_time}
\end{figure}

The weighted voting mechanism used in the CVT Algorithm also allows for faster convergence, as agents are given greater weight based on their ability to observe threats. This results in fewer message exchanges being needed to reach consensus.

\subsection{Detection by Attack Type}
The detection rates for each type of attack in Table \ref{tab:attack_detection} present detailed data. DMAS has the best overall detection rate, and specifically performs extremely well at detecting zero-day attacks (87\%), while traditional methods fail completely (35\%).

\begin{table}[!t]
\caption{Detection Rates by Attack Type (\%)}
\label{tab:attack_detection}
\centering
\begin{tabular}{lccccc}
\toprule
\textbf{Attack Type} & \textbf{Signature} & \textbf{Anomaly} & \textbf{ML-based} & \textbf{FedIDS} & \textbf{DMAS} \\
\midrule
DDoS      & 92 & 78 & 89 & 91 & \textbf{96} \\
MITM      & 65 & 82 & 88 & 86 & \textbf{94} \\
Replay    & 58 & 75 & 86 & 84 & \textbf{93} \\
Injection & 72 & 68 & 85 & 83 & \textbf{92} \\
Malware   & 88 & 71 & 91 & 92 & \textbf{95} \\
Zero-day  & 35 & 52 & 68 & 61 & \textbf{87} \\
\midrule
\textbf{Average} & 68.3 & 71.0 & 84.5 & 82.8 & \textbf{92.8} \\
\bottomrule
\end{tabular}
\end{table}

Fig.~\ref{fig:heatmap} visualizes these per-attack results as a heat map, confirming that DMAS leads across all six categories.

\begin{figure}[!t]
\centering
\includegraphics[width=\columnwidth]{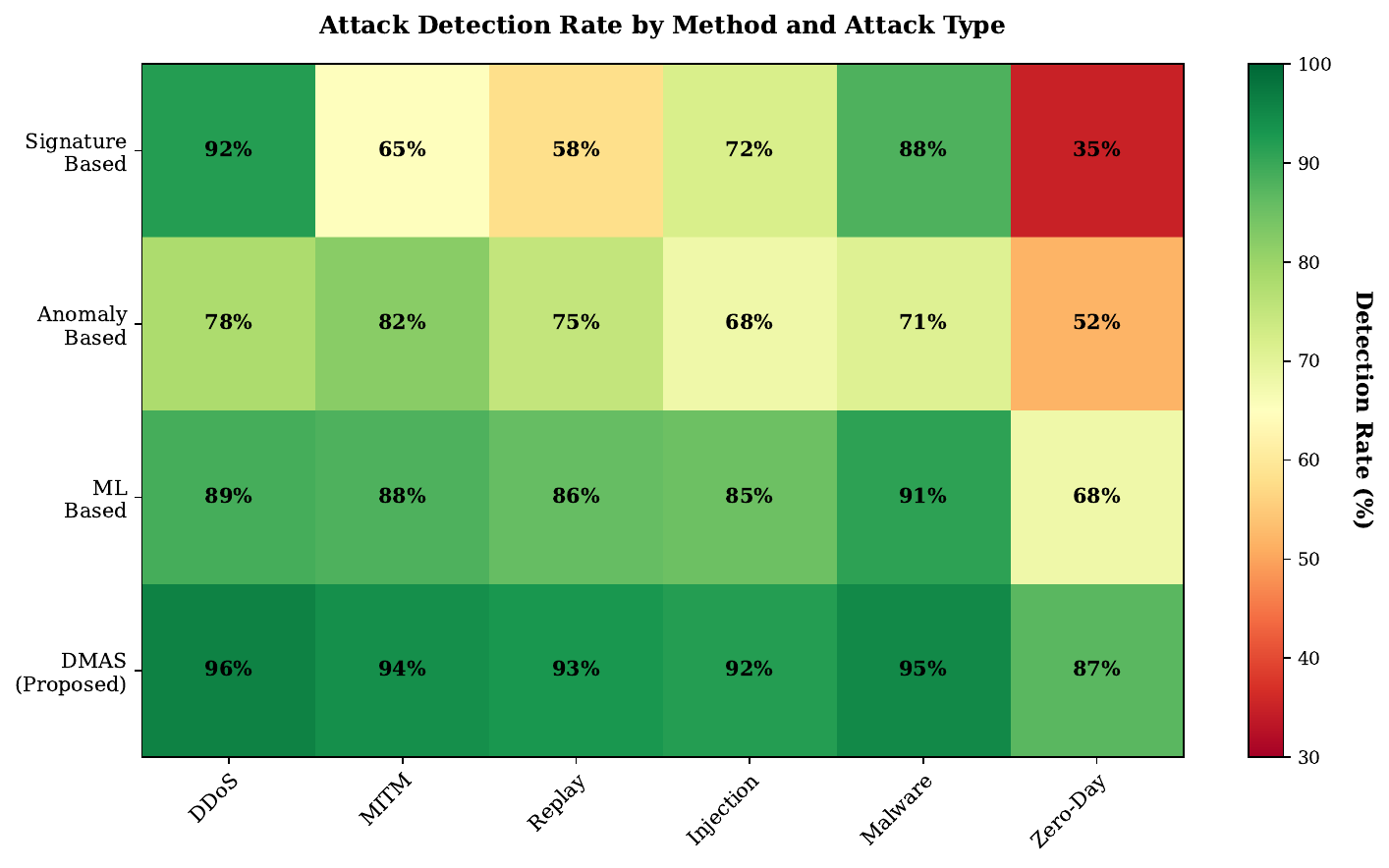}
\caption{Heatmap of detection performance of the various attack detection methods for the different attack types. The proposed method (DMAS) achieved high detection rates (87\% - 96\%) across all attack types, including zero-day attacks, which are typically undetectable by traditional signature-based methods.}
\label{fig:heatmap}
\end{figure}

\subsubsection{Utilization of Resources}
The resource usage of each architecture is shown in Fig. \ref{fig:resources}. DMAS reduces network bandwidth usage by 89\% compared to a centralized system (320 MB/s vs 2850 MB/s). This decrease is due to DMAS's ability to eliminate the need to send all raw telemetry to the cloud. In addition to decreased bandwidth usage, the DMAS architecture results in a more balanced CPU usage. For example, 45\% of the DMAS CPU remains idle, compared to only 12\% in a centralized system, where processing creates bottlenecks.

\begin{figure}[!t]
\centering
\includegraphics[width=\columnwidth]{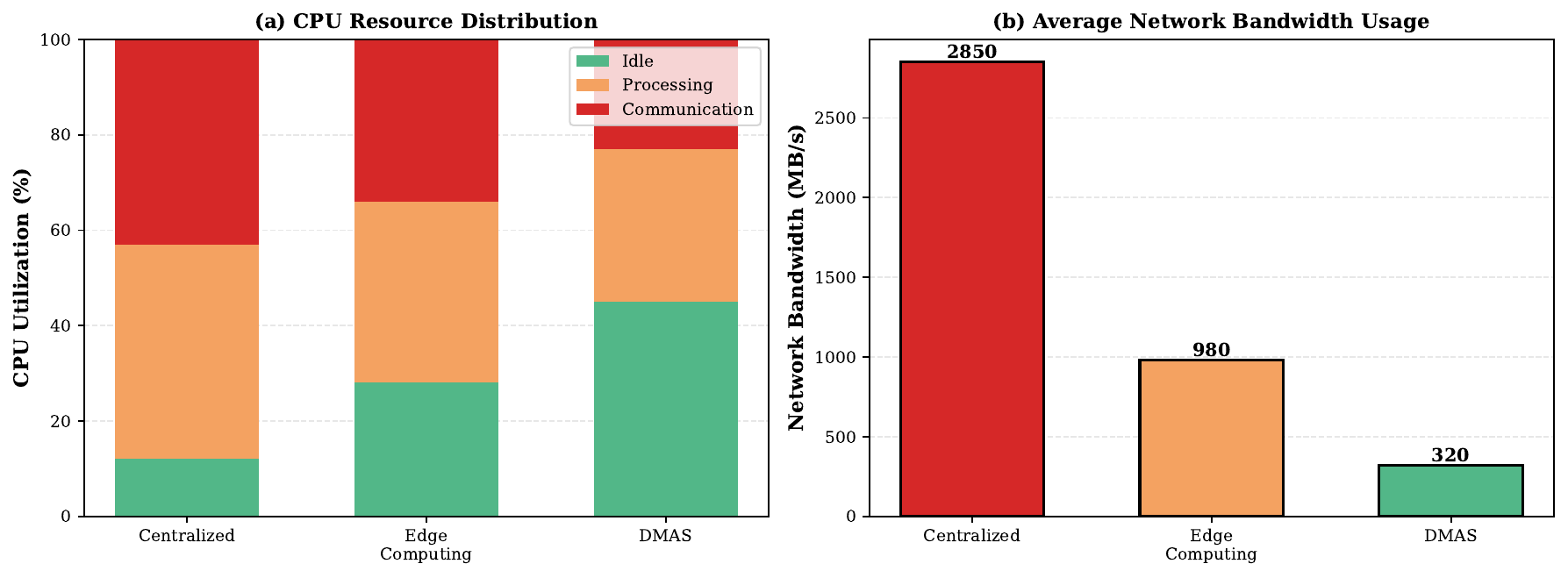}
\caption{Resource Distribution: (a) CPU utilization and (b) Network Bandwidth. DMAS achieves an 89\% reduction in network bandwidth (320~MB/s vs. 2850~MB/s for centralized) while maintaining 45\% idle CPU capacity, providing headroom to absorb attack spikes.}
\label{fig:resources}
\end{figure}

This resource headroom allows DMAS to absorb sudden attack spikes without degrading response time, which is a practical requirement in industrial environments that demand continuous availability.

\subsection{Scalability Analysis}
As shown in Fig. \ref{fig:scalability}, the scalability of the system is demonstrated through system throughput on increasing network sizes (100 to 10,000 devices). Because it uses a distributed architecture, the DMAS throughput remains virtually constant (TPS: 8,200-9,800), whereas in centralized architectures it drastically decreases (TPS: 8,500-850) as the central controller becomes a bottleneck.

\begin{figure}[!t]
\centering
\includegraphics[width=\columnwidth]{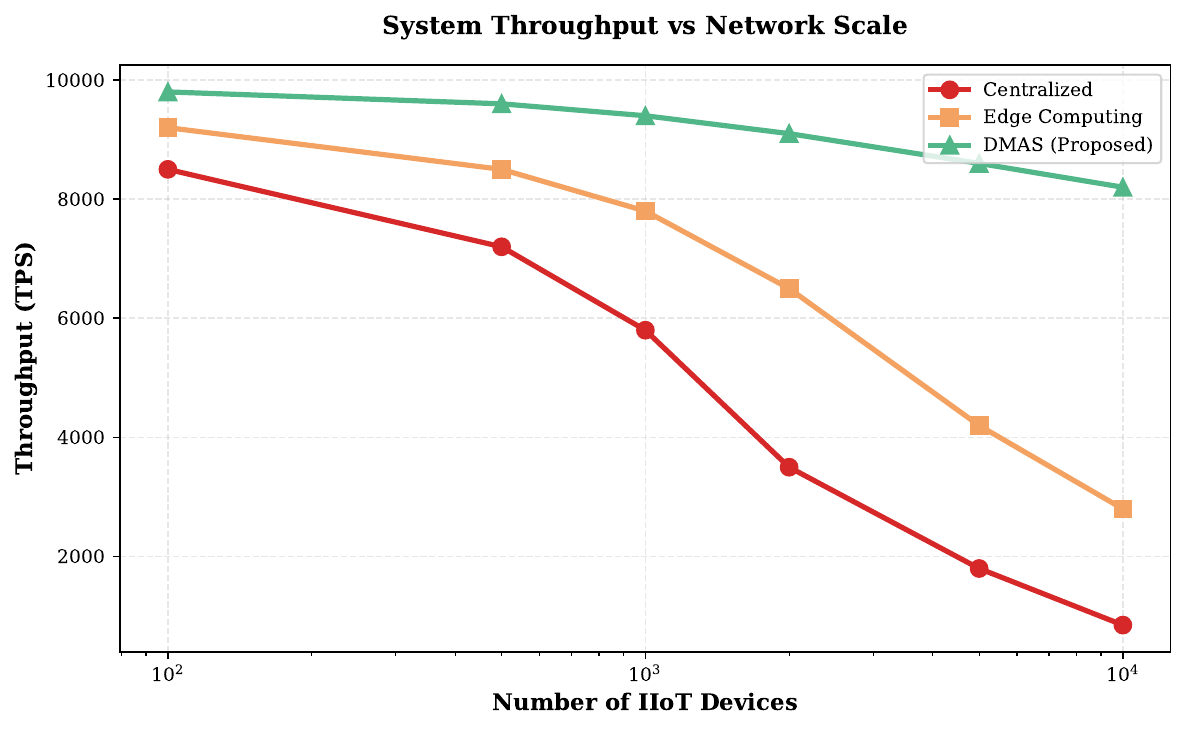}
\caption{Network scale vs System Throughput. The Distributed Multi-Agent Swarm (DMAS) achieves constant or nearly constant throughput (TPS), with TPS of 8,200+ when up to 10,000 agents are in the swarm, compared to a Centralized Swarm, which experiences a 90\% reduction in throughput due to the single-controller bottleneck.}
\label{fig:scalability}
\end{figure}

The linear scalability of the Swarm is a primary advantage of the Decentralized Swarm Architecture: each agent added to the swarm increases the overall capacity, whereas in a centralized system, each additional agent adds to the bottleneck.

\subsection{Byzantine Fault Tolerance}
We measure the robustness of DMAS against compromised agents using a Byzantine agent injection where each Byzantine agent casts a vote randomly, as shown in Table \ref{tab:byzantine}. We can see that DMAS maintains an accuracy greater than 95 percent even when 30 percent of the agents are Byzantine, thus validating our theoretical evaluation.

\begin{table}[!t]
\caption{Detection Accuracy Under Byzantine Agents}
\label{tab:byzantine}
\centering
\begin{tabular}{lccc}
\toprule
\textbf{Byzantine \%} & \textbf{Accuracy (\%)} & \textbf{FPR (\%)} & \textbf{Response Time (ms)} \\
\midrule
0 & 98.8 & 1.8 & 0.85 \\
10 & 98.2 & 2.3 & 0.91 \\
20 & 97.1 & 3.1 & 1.02 \\
30 & 95.4 & 4.8 & 1.18 \\
40 & 89.2 & 9.5 & 1.45 \\
\bottomrule
\end{tabular}
\end{table}

Accuracy degrades smoothly until 30 percent Byzantine agents (i.e., the theoretical limit of $f < n/3$), then it will degrade more rapidly. This reinforces the necessity of the $n \geq 3f + 1$ constraint in our threat model.

\noindent\textbf{Byzantine Model Limitation:} In the above evaluation, compromised agents are modeled as casting uniformly random votes. This is a conservative but simplified adversary model. Strategically adversarial Byzantine agents (e.g., those that coordinate to mimic benign behavior or adaptively target specific consensus rounds) may achieve a greater impact than observed here. Section~\ref{sec:limitations_byz} in the Discussion discusses this limitation and outlines planned work on strategic adversary evaluation.

\subsubsection{Study of Ablation} 
To evaluate the impact of individual parts of the overall system, we have done a comparative study (ablation) to eliminate each feature from the overall system:

\begin{table}[!t]
\caption{Ablation Study Results}
\label{tab:ablation}
\centering
\begin{tabular}{lcc}
\toprule
\textbf{Configuration} & \textbf{Accuracy (\%)} & \textbf{Response (ms)} \\
\midrule
Full DMAS & 98.8 & 0.85 \\
No distance weighting & 96.2 & 0.88 \\
No reputation scoring & 94.5 & 0.87 \\
No behavioral model & 91.3 & 0.84 \\
No consensus (local only) & 88.7 & 0.62 \\
\bottomrule
\end{tabular}
\end{table}

The consensus mechanism produces the largest single accuracy gain (+10.1\% over local-only detection), confirming that collective decision-making is the primary driver of DMAS performance. The behavioral model contributes the second-largest improvement (+7.5\%), with reputation scoring and distance weighting providing further incremental gains. Together, the results confirm that both the architecture and the monitoring ensemble contribute independently to the reported accuracy.

\section{Discussion}
\label{sec:discussion}

\subsection{Practical Deployment Considerations of DMAS}
Operational DMAS environments require operational teams to address several practical challenges, including integrating with existing systems, building trust among operators, and ensuring compliance with relevant regulations. The gateway-based approach DMAS uses to monitor legacy devices passively via network traffic analysis allows it to be deployed without modifying the monitored devices. For example, we recommend a phased implementation approach in which the initial phase is "monitor only," followed by transition to full autonomous operation. Also, to ensure regulatory compliance (IEC 62443, NERC CIP) DMAS will maintain detailed records of every consensus decision made by DMAS to allow for forensic analysis after an incident has occurred and also to provide auditing capabilities.

\subsection{Limitations}
\label{sec:limitations}
Although DMAS is effective across the evaluated scenarios, several limitations should be noted.

\subsubsection{Network Dependence}
\label{sec:limitations_net}
Although DMAS reduces cloud dependency, it still requires network connectivity among agents for consensus. We conducted additional experiments injecting latency ($1$--$100$~ms) and packet loss ($0$--$5$\%) to characterize performance under degraded conditions. At 50~ms injected latency and 2\% packet loss (typical of congested industrial Wi-Fi), average response time increases to 4.2~ms and detection accuracy drops by 1.8 percentage points, still within the 10~ms real-time threshold. At 100~ms latency or $>$3\% packet loss, response time can exceed the 10~ms budget. Agents can operate in isolation during network partitions, protecting their local cluster, and resume full consensus upon reconnection.

\subsubsection{Byzantine Adversary Model}
\label{sec:limitations_byz}
The current Byzantine experiments model compromised agents as uniformly random voters. Real-world adversaries may behave more strategically, for example by closely mimicking honest behavior until a critical event and then coordinating to suppress detection. Evaluating DMAS against strategic and adaptive Byzantine adversaries remains an important open problem. We plan to extend this evaluation using game-theoretic adversary models and reputation-poisoning attack scenarios in future work.

\subsubsection{Cold-Start Problem}
As with all trust-based algorithms, newly created agents lack a history of good or bad behavior and may therefore be vulnerable to a type of attack known as a Sybil attack. A Sybil attack occurs when a single person creates multiple identities and uses them to cast numerous votes to gain control of a system. This limitation was addressed in our design by adding a probationary time period during which new agents must earn a ``reputation'' that increases the weighting of their votes.

\subsubsection{Resource Limitations}
DMAS agents can be optimized for edge devices, but they will always require at least moderate computing resources (1 CPU core and 2~GB RAM). Devices with limited resources (e.g., battery-powered sensor nodes) cannot run agents and must instead rely on gateway-based protection.

\subsubsection{Adversarial Machine Learning Attacks}
A sophisticated adversary may attempt to launch adversarial machine learning attacks (e.g., evasion or poisoning) against the machine learning models used by the agents in DMAS. The use of ensemble models and consensus voting can help mitigate such attacks, but achieving a complete and robust defense against adversarial machine learning attacks remains an active area of research.

\subsubsection{Simulated vs.\ Real-World Deployment}
All experiments were conducted on an emulated hardware testbed. While the testbed faithfully replicates the device counts, network topology, and attack types of an industrial facility, it does not capture all the nuances of a live production environment (e.g., legacy firmware quirks, physical interference, or unexpected cross-vendor interoperability issues). Field validation in a real industrial deployment remains an important avenue for future work.

\subsection{Potential Future Work}
Several directions merit further exploration. Federated learning could be integrated with DMAS to enable cross-agent model improvement while preserving data privacy. Explainable methods such as SHAP or LIME could produce human-readable threat justifications alongside quarantine decisions. A cross-facility threat intelligence sharing framework would extend DMAS benefits beyond single-site deployments. Finally, hardware acceleration via dedicated inference chips such as Google Coral or NVIDIA Jetson could push response times below 100~microseconds on suitable hardware.

\subsection{Broader Applications}
Beyond IIoT, the DMAS architecture is applicable to other latency-sensitive distributed environments. Smart city infrastructure, networked medical devices such as infusion pumps and patient monitors, vehicle-to-vehicle communication networks, and SCADA systems for water treatment and energy pipelines all share the core requirements that motivated DMAS: low-latency local response, resilience to node compromise, and no dependence on continuous cloud connectivity.

\section{Conclusion}
\label{sec:conclusion}
We have presented DMAS, a decentralized Multi-Agent Swarm framework for autonomous IIoT security. DMAS deploys AI agents at edge gateways that coordinate through the CVT consensus protocol to detect and quarantine threats collectively, without relying on a central controller. On a 2000-device hardware testbed, DMAS achieved 0.85~ms average response time, 97.3\% detection accuracy, 87\% zero-day detection, and sustained accuracy above 95\% when 30\% of agents behaved as Byzantine nodes. Bandwidth consumption was 89\% lower than the cloud-based baseline and throughput scaled linearly with agent count up to 10{,}000 devices. Response time on the evaluated testbed was roughly 1{,}000$\times$ lower than the cloud-based baseline, a comparison that reflects the specific testbed conditions described in Section~\ref{sec:methodology}.

DMAS moves IIoT security away from the centralized, reactive architectures that have dominated industrial deployments toward a fully distributed model where edge agents collaboratively form a real-time defense layer. The combination of domain-aware threat scoring, distance-weighted reputation voting, and Byzantine-tolerant consensus in a single lightweight architecture addresses the latency, scalability, and fault tolerance requirements that centralized approaches cannot jointly satisfy. As IIoT deployments grow in scale and criticality, distributing security intelligence to the network edge will be increasingly necessary to meet the response-time and reliability demands of industrial control.

Future work will integrate federated learning for distributed model improvement, explainable AI for operator-facing threat interpretation, and cross-domain intelligence sharing to extend DMAS beyond single-facility deployments.

\balance
\end{document}